\newtheorem{theorem}{Theorem}
\newtheorem{lemma}[theorem]{Lemma}
\newtheorem{corollary}[theorem]{Corollary}
\newtheorem{definition}{Definition}
\newcommand{\BibTeX}{B\kern-.05em{\sc i\kern-.025em b}\kern-.08em\TeX}
\begin{document}


\begin{frontmatter}


\paperid{9586} 


\title{"So, Tell Me About Your Policy...": \\ Distillation of interpretable policies from Deep Reinforcement Learning agents}


\author[A]{\fnms{Giovanni}~\snm{Dispoto}\thanks{Corresponding Author. Email: giovanni.dispoto@polimi.it.}}
\author[A]{\fnms{Paolo}~\snm{Bonetti}}
\author[A]{\fnms{Marcello}~\snm{Restelli}} 

\address[A]{Politecnico di Milano, Dipartimento di Elettronica, Informazione e Bioingegneria}


\begin{abstract}
Recent advances in Reinforcement Learning (RL) largely benefit from the inclusion of Deep Neural Networks, boosting the number of novel approaches proposed in the field of Deep Reinforcement Learning (DRL). These techniques demonstrate the ability to tackle complex games such as Atari, Go, and other real-world applications, including financial trading. Nevertheless, a significant challenge emerges from the lack of interpretability, particularly when attempting to comprehend the underlying patterns learned, the relative importance of the state features, and how they are integrated to generate the policy's output. For this reason, in mission-critical and real-world settings, it is often preferred to deploy a simpler and more interpretable algorithm, although at the cost of performance. In this paper, we propose a novel algorithm, supported by theoretical guarantees, that can extract an interpretable policy (e.g., a linear policy) without disregarding the peculiarities of expert behavior. This result is obtained by considering the advantage function, which includes information about why an action is superior to the others. In contrast to previous works, our approach enables the training of an interpretable policy using previously collected experience.
The proposed algorithm is empirically evaluated on classic control environments and on a financial trading scenario, demonstrating its ability to extract meaningful information from complex expert policies.
\end{abstract}

\end{frontmatter}


\section{Introduction}
In recent years, the integration of deep neural networks as function approximators within the Reinforcement Learning (RL) paradigm \citep{sutton1998reinforcement} --- commonly referred to as Deep Reinforcement Learning --- has led to groundbreaking achievements across a variety of domains. These range from mastering complex games, such as Atari \citep{DBLP:journals/corr/MnihKSGAWR13} and Go \citep{44806}, to addressing real-world problems, including financial trading \citep{DBLP:journals/corr/abs-2101-07107} and autonomous driving \citep{Sallab_2017}.
The strength of Deep RL lies in its ability to learn sophisticated control policies directly from high-dimensional data. However, this expressive power comes at the cost of interpretability. Deep neural networks are often considered \emph{black-box} models, offering limited insight into the reasoning behind their decisions.

This lack of transparency poses a significant barrier to the deployment of Deep RL in high-stakes and safety-critical environments, where understanding and verifying agent behavior is essential.
In such contexts, practitioners may opt for simpler yet interpretable policy representations -- such as decision tree policies \citep{DBLP:journals/corr/abs-1807-05887} or programmatic policies \citep{LIKMETA2020103568} -- even when it entails sacrificing raw performance.
The emerging field of Explainable Reinforcement Learning (XRL)~\citep{DBLP:journals/corr/abs-2006-11371} aims to bridge this gap by developing methods that make RL agents more interpretable, either by elucidating the decisions of complex models or by learning inherently transparent policies.

In this work, we address the challenge of interpretability by introducing a novel approach based on policy distillation, i.e., exploiting the available complex model to transfer its knowledge to a simpler linear model. The approach we propose enhances interpretability without disregarding the peculiarities of expert behavior.
Specifically, we focus on interpreting the behavior of a pre-trained expert policy $\pi_E$ by training a surrogate model --- such as a linear policy --- that is both transparent and faithful to the expert's decision making. Unlike traditional Behavioral Cloning methods \citep{DBLP:journals/corr/abs-1011-0686,DBLP:journals/corr/HoE16}, which solely imitate the expert's actions, our approach incorporates information from the advantage function, which naturally arises from the theoretical analysis, to guarantee the control of the loss of information in terms of value functions. This allows the surrogate model to capture not only a similar distribution over actions uniformly, but also to focus on specific state-action pairs that are crucial to the computation of the expected return. This is crucial when, as usually happens, the interpretable policy space is less representative w.r.t. the expert policy space; thus, only a portion of the expert policy can be imitated by the surrogate one, and we need to focus on the most important aspects of the expert to achieve similar returns. This distinction is particularly relevant in scenarios where multiple actions may appear equally viable, but subtle differences in expected future reward make one action decisively superior.

A significant real-world motivational example could be retrieved from a financial trading setting, where most of the time there are no large price oscillations, and it is similar, in terms of profit, to buying or not buying a certain asset. However, at certain instants, large price changes could occur, and it is crucial that the simpler policy learned from the expert is able to perform the same action when most of the profit appears. 

To summarize, our contributions are twofold:
\begin{enumerate}
    \item we propose an interpretable policy distillation framework that leverages advantage-based imitation, with theoretical guarantees;
    \item we demonstrate through empirical evaluation that our method can outperform standard Behavioral Cloning in replicating expert behavior, particularly in complex tasks such as financial trading, while providing human-understandable insights into the underlying decision process.
\end{enumerate}

The remainder of this paper is organized as follows. In Section~\ref{Sec:Preliminaries}, preliminaries on Reinforcement Learning are presented, and the notation is introduced. In Section~\ref{Sec:rel}, some related works in interpretable and explainable RL are discussed. In Section~\ref{Sec:Bound} we formulate the bound that is optimized in the training of the surrogate model, and Section~\ref{Sec:Bound:Subsec:Algorithm} details the proposed algorithm. In Section~\ref{Sec:Experiments}, we empirically validate the algorithm, showing the ability to outperform Behavioral Cloning in specific tasks. Moreover, focusing on a real-world financial trading scenario that motivates the work, we demonstrate how the surrogate linear policy uncovers the behavior of a more complex expert policy. Finally, Section~\ref{Sec:Conclusions} concludes the paper and outlines future research directions.

\section{Preliminaries}
\label{Sec:Preliminaries}

Sequential decision-making problems are often formalized via the concept of Markov Decision Processes~\citep[MDP,][]{sutton1998reinforcement}.
An MDP is a tuple $\mathcal{M} =(\mathcal{S},\mathcal{A},\mathcal{P},\mathcal{R},\gamma)$ where $\mathcal{S}$ is the state space, $\mathcal{A}$ is the action space, $\mathcal{P}$ is the transition model $\mathcal{P}:\mathcal{S}\times \mathcal{A}\rightarrow P(\mathcal{S})$ that gives the probability distribution of the next state given the current state and the action performed, $\mathcal{R}$ is the reward function $\mathcal{R}: \mathcal{S} \times \mathcal{A} \rightarrow \mathbb{R}$ and $\mathcal{\gamma} \in [0, 1)$ is the discount factor.
In this context, an agent is characterized by a policy $\mathcal{\pi}$ that maps to each state a distribution over actions $\mathcal{\pi}: \mathcal{S} \rightarrow \mathbb{P}(\mathcal{A})$.
Given a policy $\mathcal{\pi}$, we can define the \textit{Value Function} $V^\mathcal{\pi}(s) = \mathbb{E}_{\pi}[\sum_{t = 0}^\infty \gamma^t r_t | s_0 = s, a_t \sim \pi(s_t)] $ which represents the expected discounted return from the state $s$, following the policy $\pi$. Moreover, we can also define the \textit{State-Action Value Function} $Q^\mathcal{\pi}(s, a) = \mathbb{E}_{\pi}[\sum_{t = 0}^\infty \gamma^t r_t | s_0 = s, a_0 = a, a_t \sim \pi(s_t)] $ that represents the expected return from state $s$ if we take action $a$ and then we follow the policy $\pi$. Finally, the advantage function of a policy $\pi$ is defined as $A^\pi(s,a) = Q^\pi(s,a) - V\pi(s)$, which represents the "advantage" of taking the action $a$ in state $s$ w.r.t. selecting an action according to the distribution induced by $\pi$.

Solving the MDP means finding the \textit{optimal} policy $\pi^*$, which is the policy that is able to achieve the maximum expected discounted return in each state $\pi^* = \max_\pi V^\pi(s), \forall s \in \mathcal{S}$. In this context, Reinforcement Learning (RL) algorithms are used when the transition model $\mathcal{P}$ and the reward function $\mathcal{R}$ are unknown or are too large to be stored and used for computations. The two main tasks characterizing RL algorithms are predictions and control. In prediction, given a policy $\pi$, the objective is to obtain its value function $V^\pi(s)$, therefore estimating its capability to get significant returns. On the contrary, in control tasks, the goal is to find the optimal policy $\pi^*$ that maximizes the expected return.

RL algorithms can be classified using different criteria. In this paper, we mainly focus on three categories, depending on the components that they parametrize and optimize: Value-based, Policy-based, and Actor-Critic (AC).
In Value-based RL,  the (action) value function is estimated and the policy is subsequently extracted, taking for each state the action with the maximum expected return $\pi^*(s) = \arg\max_a Q^{\pi^*}(s, a), \forall s \in \mathcal{S}$. 
On the other hand, policy-based algorithms are directly based on a parametrization of the policy, searching for the optimal policy with an optimization directly performed in the policy space, often exploiting the Policy Gradient Theorem~\citep{10.5555/3009657.3009806}. Being $J(\theta)$ the expected return, the Policy Gradient theorem allows to express the gradient of the expected return in a convenient fashion, which can be optimized in practice with different strategies (e.g., REINFORCE~\citep{10.1007/BF00992696}):

\begin{equation}
\label{eq:pg}
    \nabla_\theta J(\pi_\theta) = \mathbb{E}_{\tau \sim \pi_\theta}[\sum_{t= 0}^{T} \nabla_\theta \text{log}\pi_\theta(a_t|s_t)A^{\pi_\theta}(s_t, a_t)],
\end{equation}

where $\pi_\theta$ is a parametric policy.

Finally, Actor-Critic (AC) algorithms (e.g., SAC~\citep{DBLP:journals/corr/abs-1801-01290}) combine the two approaches, being composed of a Critic,  which estimates the $Q_w(s,a)$ function, and an Actor, which updates the parametric policy $\pi_\theta$ in the direction suggested by the critic.

\section{Related Works}
\label{Sec:rel}
 In Machine Learning, the use of complex function approximators, such as large neural networks with millions or even billions of parameters, is typically associated with improved performances on complex tasks due to their large hypothesis space that allows them to represent complex functions \citep{DBLP:journals/corr/abs-2001-08361}. On the other hand, this poses a significant challenge in terms of understanding the function of the input features learned by the model, which is identified by the optimization process as best-performing for a target task. For this reason, Explainable AI (XAI) \citep{DBLP:journals/corr/abs-2006-11371} is a very active research field, due to the importance of gaining insights about the learned models, commonly referred to as \emph{Black-box}. 
 
 Recently, explainability is gathering more and more attention also in Reinforcement Learning settings, where value functions and policies could be parametrized by neural networks, raising the need for explainability in some contexts. A review of the topic can be found in Milani et al.~\citep{10.1145/3616864}, where the authors propose a novel taxonomy beyond Intrinsic vs. Post-hoc techniques, in particular, useful to capture the challenges of Reinforcement Learning. In this learning paradigm, there are other aspects that do not emerge in Supervised Learning, such as the fact that there is a behavior that the agent learns, and not only the action selection (global vs. local). As the authors underlined, the popularity of Post-hoc techniques is connected to the high spread of explainable techniques from Supervised Learning, which are applied to Reinforcement Learning.
 Intrinsic approaches, instead, are based on the exploitation of simple models (e.g., linear models, small decision trees) that can be easily interpreted. An interesting element of this taxonomy is \textit{Learning Process and MDP}, in which we can find work as Maduma et al. \citep{DBLP:conf/aaai/Madumal0SV20}, which uses Structural Causal Models (SCM) to generate an explanation. Differently, Cruz et al. \citep{10.1007/978-3-030-35288-2_6} use episodic memory to link the action selected in a specific state to the probability of task success. Saliency Maps are also a popular technique employed to explain Deep RL agents (Atrey et al.~\citep{DBLP:journals/corr/abs-1912-05743}), since RL algorithms are typically applied to solve games where images are used as input. \\
 \\
 Other approaches to explainability include the exploitation of properties of trees, in popular ensemble algorithms such as Random Forest~\citep{10.1023/A:1010933404324} and XGBoost~\citep{DBLP:journals/corr/ChenG16}, or the use of SHAP~\citep{DBLP:journals/corr/LundbergL17}, which proposes a model-agnostic game-theoretic approach to analyze the output of machine learning models, estimating the contribution of each feature to the output.
 
 Other related works lie in the context of Imitation Learning, where the focus is on learning the behavior of a given policy. In this field, we can find Behavioral Cloning (BC), which simply consists of training a policy via supervised learning on the state-action dataset $(s, a)$ obtained from the expert.
 Some extensions in the direction of Imitation Learning are DAGGER~\citep{DBLP:journals/corr/abs-1011-0686} and VIPER~\citep{NEURIPS2018_e6d8545d}, in which the expert policy is used to determine which action it would take in the state visited by the imitation policy (parametrized as a decision tree), reducing the compounding error~\citep{pmlr-v9-ross10a}. PIRL~\citep{DBLP:journals/corr/abs-1804-02477} is an approach similar to DAGGER, but produces a programmatic policy from a neural network policy.
The main drawback of these approaches is that an expert who can be continuously consulted is necessary, so it is not possible to rely on a fixed dataset of collected trajectories.

Finally, there are also works from other fields, such as Policy Distillation~\citep{DBLP:journals/corr/RusuCGDKPMKH15} and Model Compression~\citep{NIPS2014_b0c355a9}, which are strongly related to our approach. In Policy Distillation, the knowledge of one or more experts is combined in a single, typically shallower, neural network. Here, the objective is different from explainability, but related to efficiency and the improvement that is obtained by combining different agents with respect to the performance of single experts. Xing et al.~\citep{XING2023228} use Policy Distillation to achieve interpretability. Our work lies in the intersection between policy distillation and imitation learning, with the aim of obtaining an interpretable policy from an expert, similar to DAGGER, but with the possibility to rely on a fixed set of trajectories, without requiring continuous interrogation of the expert.

\section{Bound on Policy Improvement}
\label{Sec:Bound}
In this section, we present the main theoretical result of this paper, which is the foundation of the proposed algorithm. The result we discuss is an adaptation of the lower bound presented in a previous work by Pirotta et al. \citep{pmlr-v28-pirotta13} (Section~\ref{Sec:Bound:Subsec:derivation}). We consider the following setting: an existing (complex) policy $\pi_E$ has been previously identified, and we want to learn a policy $\pi_I$ from another (simpler) policy space that is closer to $\pi_E$, such that we obtain an interpretable policy that imitates the original one, focusing on important states, i.e., with comparable value functions. Then, building upon the main theoretical result previously discussed, in Section~\ref{Sec:Bound:Subsec:Algorithm} we propose an algorithm that optimizes the lower bound, providing an algorithmic procedure to learn the interpretable policy of interest.

\subsection{Bound Derivation}
\label{Sec:Bound:Subsec:derivation}
We start by considering Theorem~\ref{Th:Pirotta} presented by Pirotta et al.~\citep{pmlr-v28-pirotta13}, in which the performance difference between two policies $\pi$ and $\pi'$ is lower bounded. In the paper, the authors propose novel lower bounds that are used in the context of policy iteration algorithms, addressing the problem of non-monotonic improvement and oscillation.
In particular, the results show that the advantage is a proper estimate of the performance.


\begin{theorem}[Theorem 3.5 from~\citep{pmlr-v28-pirotta13}]
\label{Th:Pirotta}
For any stationary policies $\pi$ and $\pi'$ and any starting state distribution $\mu$, given any baseline policy $\pi_b$, the difference between the performance of $\pi'$ and one of $\pi$ can be lower bounded as follows:

\begin{equation}
    J_{\mu}^{\pi'} - J_\mu^{\pi} \geq d_{\mu}^{\pi_b^T}A_\pi^{\pi'} - \frac{\gamma}{(1-\gamma)^2} || \Pi^{\pi'} - \Pi^{\pi_b} ||_\infty \frac{\Delta A_{\pi}^{\pi'}}{2}.
\end{equation}
\end{theorem}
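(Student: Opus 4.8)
The plan is to turn the performance gap into an expectation of advantages, then swap the unknown occupancy of $\pi'$ for that of the baseline $\pi_b$ and control the resulting mismatch. First I would apply the performance difference lemma (Kakade \& Langford), which in the notation of the statement reads
\begin{equation}
J_\mu^{\pi'} - J_\mu^{\pi} = \big(d_\mu^{\pi'}\big)^{T} A_\pi^{\pi'},
\end{equation}
where $d_\mu^{\pi'}$ is the (unnormalized) $\gamma$-discounted occupancy induced by $\pi'$ from $\mu$ and $A_\pi^{\pi'}(s) = \sum_a \pi'(a \mid s)\,A^\pi(s,a)$ is the expected advantage of $\pi'$ over $\pi$ at $s$. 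Adding and subtracting the baseline occupancy yields
\begin{equation}
J_\mu^{\pi'} - J_\mu^{\pi} = \big(d_\mu^{\pi_b}\big)^{T} A_\pi^{\pi'} + \big(d_\mu^{\pi'} - d_\mu^{\pi_b}\big)^{T} A_\pi^{\pi'},
\end{equation}
which isolates the computable term of the statement together with a correction $\big(d_\mu^{\pi'} - d_\mu^{\pi_b}\big)^{T} A_\pi^{\pi'}$ that must be lower bounded. The crucial structural observation is that both occupancies carry the same total mass $1/(1-\gamma)$, so their difference is a signed vector summing to zero.

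Because the signed vector $w = d_\mu^{\pi'} - d_\mu^{\pi_b}$ satisfies $\mathbf{1}^{T} w = 0$, for every constant $c$ we have $w^{T} A_\pi^{\pi'} = w^{T}\big(A_\pi^{\pi'} - c\,\mathbf{1}\big)$, and an $\ell_1$–$\ell_\infty$ (Hölder) estimate optimized over $c$ at the midpoint of the range of $A_\pi^{\pi'}$ gives
\begin{equation}
\big(d_\mu^{\pi'} - d_\mu^{\pi_b}\big)^{T} A_\pi^{\pi'} \;\geq\; -\tfrac{1}{2}\,\big\| d_\mu^{\pi'} - d_\mu^{\pi_b} \big\|_1\, \Delta A_\pi^{\pi'},
\end{equation}
where $\Delta A_\pi^{\pi'}$ is the span of the expected advantage. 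This is precisely where the factor $\tfrac{1}{2}$ and the \emph{range} $\Delta A_\pi^{\pi'}$, rather than the cruder $\|A_\pi^{\pi'}\|_\infty$, originate.

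It then remains to control $\big\| d_\mu^{\pi'} - d_\mu^{\pi_b}\big\|_1$. Writing $d_\mu^{\pi} = \mu^{T}(I-\gamma P^{\pi})^{-1}$ with $P^{\pi} = \Pi^{\pi} P$ the induced state-to-state kernel, the resolvent identity gives
\begin{equation}
d_\mu^{\pi'} - d_\mu^{\pi_b} = \gamma\, d_\mu^{\pi'} \big(P^{\pi'} - P^{\pi_b}\big)\big(I-\gamma P^{\pi_b}\big)^{-1}.
\end{equation}
Taking norms, I would use $\|d_\mu^{\pi'}\|_1 = 1/(1-\gamma)$, the operator bound $\|(I-\gamma P^{\pi_b})^{-1}\|_\infty \leq 1/(1-\gamma)$, and $\|P^{\pi'}-P^{\pi_b}\|_\infty = \|(\Pi^{\pi'}-\Pi^{\pi_b})P\|_\infty \leq \|\Pi^{\pi'}-\Pi^{\pi_b}\|_\infty$ (since $P$ is row-stochastic), which multiply to $\|d_\mu^{\pi'}-d_\mu^{\pi_b}\|_1 \leq \frac{\gamma}{(1-\gamma)^2}\|\Pi^{\pi'}-\Pi^{\pi_b}\|_\infty$; substituting into the centered bound produces exactly the claimed error term. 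I expect the delicate part to be this final bookkeeping: propagating the policy discrepancy $\|\Pi^{\pi'}-\Pi^{\pi_b}\|_\infty$ through the transition kernel with the correct induced norms so that the two factors of $1/(1-\gamma)$ and the leading $\gamma$ assemble into the sharp constant $\frac{\gamma}{(1-\gamma)^2}$, while simultaneously preserving the zero-sum centering that collapses the advantage to $\Delta A_\pi^{\pi'}/2$. Getting both halves to combine without loosening a factor is where the estimate is easiest to lose.
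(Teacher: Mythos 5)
Your proof is correct, and it reconstructs essentially the argument of Pirotta et al.\ (2013), from which this paper imports the theorem verbatim and without proof: the performance difference lemma, the swap from $d_\mu^{\pi'}$ to $d_\mu^{\pi_b}$, the zero-sum centering that turns the H\"older bound into $\|d_\mu^{\pi'}-d_\mu^{\pi_b}\|_1\,\Delta A_\pi^{\pi'}/2$, and the resolvent identity giving $\|d_\mu^{\pi'}-d_\mu^{\pi_b}\|_1\le \frac{\gamma}{(1-\gamma)^2}\|\Pi^{\pi'}-\Pi^{\pi_b}\|_\infty$. The constants all check out under the unnormalized-occupancy convention implicit in the statement, so there is nothing to add.
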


The following corollary is a looser but simplified version of the bound. 

\begin{corollary}[Corollary 3.6 from~\citep{pmlr-v28-pirotta13}]
\label{Corr:Pirotta}
\begin{equation}
J_{\mu}^{\pi'} - J_\mu^{\pi} \geq d_{\mu}^{\pi^T}A_\pi^{\pi'} - \frac{\gamma}{(1-\gamma)^2} || \Pi^{\pi'} - \Pi^{\pi} ||_\infty \frac{|| q^{\pi}||_{\infty}}{2}.
\end{equation}
\end{corollary}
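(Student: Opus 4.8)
The plan is to obtain Corollary~\ref{Corr:Pirotta} directly from Theorem~\ref{Th:Pirotta} through two elementary manipulations: specializing the free baseline policy, and then relaxing the advantage-variation term into a cruder quantity that no longer depends on $\pi'$. Since the corollary is advertised as a \emph{looser} version of the theorem, I expect both manipulations to weaken the right-hand side, which fixes the direction of each inequality I need to establish.

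First I would exploit the fact that Theorem~\ref{Th:Pirotta} holds for \emph{any} baseline policy $\pi_b$, and simply instantiate it with $\pi_b = \pi$. This is a legitimate choice requiring no additional argument. Under this substitution the discounted-visitation term $d_\mu^{\pi_b^T} A_\pi^{\pi'}$ becomes $d_\mu^{\pi^T} A_\pi^{\pi'}$, and the policy-distance factor $\| \Pi^{\pi'} - \Pi^{\pi_b} \|_\infty$ becomes $\| \Pi^{\pi'} - \Pi^{\pi} \|_\infty$, so the first term and the norm factor of the corollary are recovered exactly.

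The remaining step is to bound the advantage-variation term by the sup-norm of the state--action value function, i.e. $\Delta A_\pi^{\pi'} \le \| q^{\pi} \|_\infty$; since this replaces the subtracted quantity with something larger, it yields the looser inequality claimed. Recalling $A_\pi^{\pi'}(s) = \mathbb{E}_{a \sim \pi'(\cdot \mid s)}[A^{\pi}(s,a)]$ together with $A^{\pi}(s,a) = Q^{\pi}(s,a) - V^{\pi}(s)$ and $V^{\pi}(s) = \mathbb{E}_{a \sim \pi(\cdot \mid s)}[Q^{\pi}(s,a)]$, I would rewrite the expected relative advantage as the signed average $A_\pi^{\pi'}(s) = \sum_a \big( \pi'(a \mid s) - \pi(a \mid s) \big) Q^{\pi}(s,a)$. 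Because each $A_\pi^{\pi'}(s)$ is thus a difference of two convex combinations of the values $Q^{\pi}(s,\cdot)$, its magnitude, and hence the variation $\Delta A_\pi^{\pi'}$, is controlled by $\| q^{\pi} \|_\infty$; substituting this into the specialized theorem produces the corollary.

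The delicate point is pinning down the precise constant in $\Delta A_\pi^{\pi'} \le \| q^{\pi} \|_\infty$: a crude triangle inequality applied to the two convex combinations defining $A_\pi^{\pi'}$ overshoots by a constant factor, so one must exploit the mean-zero property $\sum_a \big( \pi'(a \mid s) - \pi(a \mid s) \big) = 0$ — which permits re-centering $Q^{\pi}(s,\cdot)$ by an arbitrary state-dependent constant before bounding — and/or sign properties of the value function to tighten the estimate to exactly the stated form. Everything else is a direct substitution into Theorem~\ref{Th:Pirotta}, so no machinery beyond the theorem itself is required.
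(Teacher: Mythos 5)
Your overall skeleton is the right one: the corollary is indeed obtained from Theorem~\ref{Th:Pirotta} by instantiating the free baseline as $\pi_b=\pi$ (the paper itself gives no derivation, deferring entirely to Pirotta et al., and that is how the cited source proceeds). The gap is in the step you yourself flag as ``delicate'' and then defer: the inequality $\Delta A_\pi^{\pi'} \le \|q^\pi\|_\infty$ is not provable with that constant under the definition used in the source, namely $\Delta A_\pi^{\pi'} = \sup_{s,s'}|A_\pi^{\pi'}(s)-A_\pi^{\pi'}(s')|$ (a variation over \emph{pairs} of states, not a sup of $|A_\pi^{\pi'}(s)|$). Even with the re-centering trick, the best per-state control is $|A_\pi^{\pi'}(s)| \le \tfrac12\|\pi'(\cdot|s)-\pi(\cdot|s)\|_1\,\sup_{a,a'}|Q^\pi(s,a)-Q^\pi(s,a')|$, and taking the variation over two states doubles this while $\|\pi'(\cdot|s)-\pi(\cdot|s)\|_1$ can itself equal $2$. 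Concretely, with two deterministic policies that disagree everywhere and $A_\pi^{\pi'}(s)=+\|q^\pi\|_\infty$ at one state and $-\|q^\pi\|_\infty$ at another, one gets $\Delta A_\pi^{\pi'} = 2\|q^\pi\|_\infty$. The mean-zero re-centering and sign arguments you invoke cannot recover the missing factor, because the factor is genuinely there.

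The derivation that actually closes the argument does \emph{not} discard the $L_1$ policy distance: it keeps $\Delta A_\pi^{\pi'} \le \|\Pi^{\pi'}-\Pi^{\pi}\|_\infty \,\Delta q^\pi$ with $\Delta q^\pi = \sup_s\sup_{a,a'}|Q^\pi(s,a)-Q^\pi(s,a')|$, so that, once substituted into Theorem~\ref{Th:Pirotta}, the policy-distance norm appears \emph{squared}. That is the form of the original Corollary~3.6 of Pirotta et al., and it is also the only form consistent with how this paper later uses the result — note that Equation~\eqref{optimizing} carries $\|\Pi^{\pi_I}-\Pi^{\pi_E}\|_\infty^2$, which cannot follow from the unsquared statement merely by bounding $\|q^{\pi_E}\|_\infty \le \tfrac{1}{1-\gamma}$. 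In short: your step~1 is correct, but your step~2 targets a transcription of the corollary that has apparently dropped the square on the norm, and the inequality you would need to prove it in that form is false in general; the repair is to retain the per-state $L_1$ factor and accept the squared-norm (and $\Delta q^\pi$ rather than $\|q^\pi\|_\infty$) version.
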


In the bound, we can find two main components: the advantage $A_\pi^{\pi'}$, which represents how much we gain in terms of expected return by playing the policy $\pi'$ instead of $\pi$, and a second term, which represents how much the two policies are distant, scaled by some constants that characterize the problem.

Our objective is to train a surrogate policy $\pi_I$ in a different policy space (e.g., linear policy space) w.r.t. an expert policy $\pi_E$, with the purpose of learning a surrogate policy that is as close as possible to the behavior of the given complex policy $\pi_E$.
We can adapt Theorem~\ref{Th:Pirotta} to this purpose, given the generality of the result, obtaining the following result.

\begin{theorem}
\label{Th:1}
For any expert policy $\pi_E$, any surrogate policy $\pi_I$ and any starting state distribution $\mu$, given any baseline policy $\pi_b$, the difference between the performance of $\pi_I$ and one of $\pi_E$ can be bounded as:
\begin{equation}
    J_{\mu}^{\pi_I} - J_\mu^{\pi_E} \geq d_{\mu}^{\pi_b^T}A_{\pi_E}^{\pi_I} - \frac{\gamma}{(1-\gamma)^2} || \Pi^{\pi_I} - \Pi^{\pi_b} ||_\infty \frac{\Delta A_{\pi_E}^{\pi_I}}{2}.
\end{equation}
\end{theorem}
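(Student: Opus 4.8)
The plan is to observe that Theorem~\ref{Th:Pirotta} is stated for \emph{any} pair of stationary policies $\pi$ and $\pi'$, with no structural hypothesis on either beyond stationarity, and with the baseline $\pi_b$ already left completely free. Consequently the entire argument of Pirotta et al.\ applies verbatim under a mere relabeling of the two policies involved, and the statement of Theorem~\ref{Th:1} is obtained as a direct specialization rather than through any new derivation.

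Concretely, I would instantiate Theorem~\ref{Th:Pirotta} with the identification $\pi' = \pi_I$ (the interpretable surrogate) and $\pi = \pi_E$ (the expert), keeping $\pi_b$ as the arbitrary baseline. Every term then maps one-to-one: the performance gap $J_\mu^{\pi'} - J_\mu^{\pi}$ becomes $J_\mu^{\pi_I} - J_\mu^{\pi_E}$; the expected-advantage inner product $d_\mu^{\pi_b^T} A_\pi^{\pi'}$ becomes $d_\mu^{\pi_b^T} A_{\pi_E}^{\pi_I}$; the policy-distribution discrepancy $\| \Pi^{\pi'} - \Pi^{\pi_b} \|_\infty$ becomes $\| \Pi^{\pi_I} - \Pi^{\pi_b} \|_\infty$; and the advantage spread $\Delta A_\pi^{\pi'}$ becomes $\Delta A_{\pi_E}^{\pi_I}$. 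The multiplicative constant $\gamma / (1-\gamma)^2$ depends only on the underlying MDP $\mathcal{M}$ and is therefore unaffected by the relabeling, so the claimed inequality follows immediately.

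The only point that genuinely requires checking — and which I expect to be the sole (minor) obstacle — is confirming that nothing in the original proof implicitly privileges the roles of $\pi$ versus $\pi'$; that is, that the bound does not secretly require $\pi'$ to be, say, the policy obtained by a greedy improvement step over $\pi$. Since the definitions of the discounted state distribution $d_\mu^{\pi_b}$, the advantage vector $A_{\pi_E}^{\pi_I}$, the transition-kernel matrices $\Pi^{(\cdot)}$, and the spread $\Delta A_{\pi_E}^{\pi_I}$ are all purely in terms of the named policies with the expert playing exactly the structural role that $\pi$ plays in the source theorem, I anticipate no hidden assumption: the two policies enter symmetrically as generic stationary objects. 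Hence the adaptation is faithful and the result holds without further computation.
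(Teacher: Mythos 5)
Your proposal is correct and matches the paper exactly: the paper's proof of Theorem~\ref{Th:1} is simply the observation that it follows from the results of Pirotta et al., i.e., a direct instantiation of Theorem~\ref{Th:Pirotta} with $\pi' = \pi_I$ and $\pi = \pi_E$, keeping $\pi_b$ arbitrary. Your additional check that the source theorem places no hidden asymmetric assumption on the two policies is a sensible (and correct) due-diligence step that the paper leaves implicit.
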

The proof of Theorem \ref{Th:1} follows from the results presented in~\citep{pmlr-v28-pirotta13}.\\
\\
In Theorem~\ref{Th:1}, $\mathbb{A}_{\pi_E, \mu}^{\pi_I}$ represents the disadvantage of following the surrogate policy $\pi_I$ instead of $\pi_E$, since $\pi_E$ is an expert:
\begin{equation*}
    \mathbb{A}_{\pi_E, \mu}^{\pi_I}=\sum_{s \in S} d_\mu^{\pi_{E}}( s)A_{\pi_E}^{\pi_I}(s) = \sum_{s \in S} d_\mu^{\pi_E}(s)\sum_{a \in A} \pi_I(a|s)A^{\pi_E}(s,a).
\end{equation*}

We can rewrite Corollary~\ref{Corr:Pirotta}, obtaining the following result, which is a looser but simplified bound of the previous theorem.
\begin{corollary}
\label{Co:1}
\begin{equation}
    J_{\mu}^{\pi_I} - J_\mu^{\pi_E} \geq d_{\mu}^{\pi_E^T}A_{\pi_E}^{\pi_I} - \frac{\gamma}{(1-\gamma)^2} || \Pi^{\pi_I} - \Pi^{\pi_E} ||_\infty \frac{|| q^{\pi_E}||_{\infty}}{2}.
\end{equation}
\end{corollary}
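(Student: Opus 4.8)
The quickest route is to note that Corollary~\ref{Co:1} is nothing more than Corollary~\ref{Corr:Pirotta} under the renaming $\pi \mapsto \pi_E$ and $\pi' \mapsto \pi_I$; since Corollary~\ref{Corr:Pirotta} is already established in~\citep{pmlr-v28-pirotta13}, a single line of substitution suffices. To keep the argument self-contained, however, I would instead derive it directly from Theorem~\ref{Th:1}, mirroring the passage from Theorem~\ref{Th:Pirotta} to Corollary~\ref{Corr:Pirotta}: fix the free baseline policy, then relax the advantage-variation term to a coarser, policy-independent constant. Concretely, because Theorem~\ref{Th:1} holds for \emph{any} baseline policy $\pi_b$, I would instantiate it with the particular choice $\pi_b = \pi_E$. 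This single substitution collapses the two baseline-dependent quantities: $d_\mu^{\pi_b^T}$ becomes $d_\mu^{\pi_E^T}$ and $\|\Pi^{\pi_I} - \Pi^{\pi_b}\|_\infty$ becomes $\|\Pi^{\pi_I} - \Pi^{\pi_E}\|_\infty$, matching the first term and the policy-distance factor appearing in the statement to be proved.

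The remaining step is to replace the advantage-variation term $\frac{\Delta A_{\pi_E}^{\pi_I}}{2}$ by $\frac{\|q^{\pi_E}\|_\infty}{2}$. Here I would invoke the upper bound $\Delta A_{\pi_E}^{\pi_I} \le \|q^{\pi_E}\|_\infty$, which holds because $\Delta A_{\pi_E}^{\pi_I}$ measures only the spread of the state-dependent expected advantage $A_{\pi_E}^{\pi_I}(s) = \sum_a \pi_I(a|s) A^{\pi_E}(s,a)$ across states, and this spread is controlled by the $\infty$-norm of the underlying value quantity $q^{\pi_E}$ exactly as in the derivation of Corollary~3.6 in~\citep{pmlr-v28-pirotta13}. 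Crucially, this term enters the right-hand side of Theorem~\ref{Th:1} with a minus sign and a nonnegative coefficient $\frac{\gamma}{(1-\gamma)^2}\|\Pi^{\pi_I}-\Pi^{\pi_E}\|_\infty$; hence replacing $\Delta A_{\pi_E}^{\pi_I}$ by the \emph{larger} quantity $\|q^{\pi_E}\|_\infty$ only \emph{decreases} the right-hand side, so the inequality is preserved. This is precisely what makes Corollary~\ref{Co:1} a valid but strictly looser relaxation of Theorem~\ref{Th:1}.

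The only genuinely nontrivial ingredient is the inequality $\Delta A_{\pi_E}^{\pi_I} \le \|q^{\pi_E}\|_\infty$, so I expect the main obstacle to be stating the definitions of $\Delta A_{\pi_E}^{\pi_I}$ and $q^{\pi_E}$ precisely enough that this bound becomes transparent; everything else is a mechanical substitution into an already-proved inequality. Since these definitions and the bound are imported verbatim from~\citep{pmlr-v28-pirotta13}, I would either cite the corresponding result directly or reproduce its short range argument, after which the two steps above immediately yield the claimed bound.
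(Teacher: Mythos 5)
Your first route --- renaming $\pi \mapsto \pi_E$ and $\pi' \mapsto \pi_I$ in Corollary~\ref{Corr:Pirotta} --- is exactly what the paper does (it offers no further proof of Corollary~\ref{Co:1}), so the proposal is correct and takes essentially the same approach. One caveat on your self-contained alternative: the step Pirotta et al.\ actually use to pass from their Theorem 3.5 to Corollary 3.6 is $\Delta A_{\pi}^{\pi'} \le \Vert \Pi^{\pi'}-\Pi^{\pi}\Vert_\infty\,\Vert q^{\pi}\Vert_\infty$, which yields a \emph{squared} policy-distance norm (consistent with Equation~\ref{optimizing}), rather than the bare $\Delta A_{\pi_E}^{\pi_I}\le\Vert q^{\pi_E}\Vert_\infty$ you invoke; the missing square in the statement as printed appears to be a transcription slip in the paper rather than an error in your renaming argument.
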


Considering as upper bound of the quantity $|| q^{\pi_E}||_{\infty} $ its maximum value $\frac{1}{1-\gamma}$, we can obtain the following equation (similar to Equation 1 from~\citep{pmlr-v28-pirotta13}):
\begin{equation}
\label{optimizing}
    J_{\mu}^{\pi_I} - J_\mu^{\pi_E} \geq \mathbb{A}_{\pi_E, \mu}^{\pi_I} - \frac{\gamma}{2(1-\gamma)^3} || \Pi^{\pi_I} - \Pi^{\pi_E}||_{\infty}^2.
\end{equation}
 
 Both terms in Equation~\ref{optimizing} are crucial for our objective, and they capture two different aspects that we want to optimize simultaneously. Indeed, since we are interested in explaining an expert policy $\pi_E$, we need a surrogate policy that imitates the expert one by associating similar probabilities to state-action pairs, and with a similar performance, measured through the advantage function, which suggests focusing on the imitation of state-action pairs with high reward.
 
 To optimize the bound in Equation~\ref{optimizing}, we finally focus on the gradient of both terms, making their computation explicit.

\begin{definition}
The expected advantage of $\pi_\theta$ over $\pi$ is defined as:
\begin{equation}
    J(\theta) = \mathbb{E}_{s \sim d^{\pi_E},  a \sim \pi_\theta} [A^\pi(s,a)].
\end{equation}

\end{definition}

\begin{lemma}
\label{Lemma:1}
Given a parametric policy $\pi_{\theta}$, the gradient of the expected advantage of $\pi_\theta$ over $\pi$ is:
\begin{align}
    & \nabla_\theta J(\theta) = \mathbb{E}_{s \sim d^{\pi_E}, a \sim \pi_\theta} \big{[} {\nabla_\theta} \text{log} \pi_{\theta}(a|s) A^{\pi_E}(s,a) \big{]}.
\end{align}
\end{lemma}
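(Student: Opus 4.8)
The plan is to prove the identity by the \emph{log-derivative trick} (the likelihood-ratio or REINFORCE identity), exploiting the crucial fact that here the state distribution $d^{\pi_E}$ is generated by the \emph{fixed} expert policy and therefore does not depend on the parameter $\theta$. This is precisely what distinguishes the claim from the full Policy Gradient Theorem of Equation~\eqref{eq:pg}: because the sampling distribution over states is frozen, no term accounting for the variation of the stationary distribution with $\theta$ appears, and the derivation reduces to differentiating only the action-selection probabilities.

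First I would write out the definition of $J(\theta)$ explicitly as a double sum (or an integral in the continuous case) over states and actions,
\begin{equation*}
    J(\theta) = \sum_{s \in \mathcal{S}} d^{\pi_E}(s) \sum_{a \in \mathcal{A}} \pi_\theta(a|s)\, A^{\pi_E}(s,a),
\end{equation*}
and observe that both $d^{\pi_E}(s)$ and $A^{\pi_E}(s,a)$ are independent of $\theta$. Next I would differentiate under the summation sign, so that the gradient acts only on $\pi_\theta(a|s)$, yielding
\begin{equation*}
    \nabla_\theta J(\theta) = \sum_{s \in \mathcal{S}} d^{\pi_E}(s) \sum_{a \in \mathcal{A}} \nabla_\theta \pi_\theta(a|s)\, A^{\pi_E}(s,a).
\end{equation*}
Then I would apply the elementary identity $\nabla_\theta \pi_\theta(a|s) = \pi_\theta(a|s)\, \nabla_\theta \log \pi_\theta(a|s)$, which reinstates the probability weight $\pi_\theta(a|s)$ inside the sum, and finally I would recognize the resulting double sum as the expectation $\mathbb{E}_{s \sim d^{\pi_E},\, a \sim \pi_\theta}[\,\cdot\,]$, completing the argument.

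The step requiring the most care is the interchange of the gradient with the summation (or integral), which relies on standard regularity assumptions on the parametric family $\pi_\theta$, namely differentiability in $\theta$ together with a suitable dominated-convergence condition in the continuous-action setting. Aside from this routine justification, the only conceptual point worth stressing is why no extra term arises from the state distribution: since $d^{\pi_E}$ is held fixed and does not track changes in $\pi_\theta$, the advantage $A^{\pi_E}$ enters as a constant weight rather than as a quantity being differentiated, and the derivation is genuinely simpler than the standard policy-gradient argument.
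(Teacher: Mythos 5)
Your proposal is correct and follows essentially the same route as the paper's proof: expand $J(\theta)$ as a double sum, differentiate only the action probabilities since $d^{\pi_E}$ and $A^{\pi_E}$ are $\theta$-independent, apply the log-derivative trick, and reassemble the expectation. If anything, your version is slightly more careful than the paper's, which drops the weight $d_\mu^{\pi_E}(s)$ in an intermediate line before reintroducing it implicitly in the final expectation, whereas you carry it through consistently.
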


\begin{proof}
    \begin{align*}
    \nabla_\theta J(\theta) &= \nabla_\theta \big{[} \sum_{s \in S} d_\mu^{\pi_E}(s)\sum_{a \in A} \pi_{\theta}(a|s)A^{\pi_{E}}(s,a) \big{]} \\ 
    &=\nabla_\theta \big{[} \sum_{s \in S}\sum_{a \in A} \pi_{\theta}(a|s)A^{\pi_E}(s,a) \big{]} \\ 
    &=\sum_{s \in S} \sum_{a \in A} \nabla_\theta \pi_{\theta}(a|s)A^{\pi_E}(s,a)\\
    &=\sum_{s \in S} \sum_{a \in A}  \frac{\pi_{\theta}(a|s)}{\pi_{\theta}(a|s)}\nabla_\theta \pi_{\theta}(a|s)A^{\pi_E}(s,a) \\
     &=\sum_{s \in S} \sum_{a \in A} \pi_{\theta}(a|s)\nabla_\theta \text{log}(\pi_{\theta}(a|s))A^{\pi_E}(s,a) \label{eq:adv}\\
     &=\mathbb{E}_{s \sim d^{\pi_E}, a \sim \pi_\theta} \big{[} {\nabla_\theta} \text{log} \pi_{\theta}(a|s) A^{\pi_E}(s,a) \big{]}.
\end{align*}
\end{proof}

For the second term, we substitute the infinite norm $L_\infty$ with the sum of squares of the differences between the two policies in each state-action pair, since we need the second component of the bound to enforce a uniform imitation, leaving the focus on specific state-action pairs to the first term. We therefore consider the following loss measure for the second term, which is an upper bound of the $L_\infty$ norm that appears in the original lower bound. 

\begin{definition}
Given the policy of an expert $\pi_E$ and a policy $\pi_\theta$, the following loss measure is an upper bound of the (squared) $L_\infty$ norm:
    \begin{equation}
    L(\theta) = \sum_{s \in S} \sum_{a \in A} |\pi_\theta(a|s) - \pi_E(a|s)|^2.
\end{equation}
\end{definition}

\begin{lemma}
\label{Lemma:2}
Given the policy of an expert $\pi_E$ and a policy $\pi_\theta$, the gradient of the $L_{\theta}$ loss is: 

\begin{equation}
     \nabla_\theta L(\theta) = \\
      \mathbb{E}_{\substack{s \sim d^{\pi_E},\\ a \sim \pi_\theta}} \big{[} {2\nabla_\theta} \text{log} \pi_{\theta}(a|s) (\pi_\theta (a|s) - \pi_E (a|s)) \big{]}.
\end{equation}
\end{lemma}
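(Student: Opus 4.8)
The plan is to mirror the derivation of Lemma~\ref{Lemma:1} almost verbatim, since the only structural change is that the squared deviation between the two policies replaces the advantage function as the quantity being differentiated. First I would start from the definition of $L(\theta)$, writing it as the double sum $\sum_{s}\sum_{a}(\pi_\theta(a|s)-\pi_E(a|s))^2$ and pushing $\nabla_\theta$ inside both finite sums by linearity. Because the expert policy $\pi_E$ is fixed and carries no dependence on $\theta$, the chain rule applied to the square produces in each term the factor $2(\pi_\theta(a|s)-\pi_E(a|s))\nabla_\theta\pi_\theta(a|s)$, with no contribution from differentiating $\pi_E$.

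The second step is the standard log-derivative (score-function) identity $\nabla_\theta\pi_\theta(a|s)=\pi_\theta(a|s)\nabla_\theta\log\pi_\theta(a|s)$, which is exactly the manipulation appearing in the fourth and fifth lines of the proof of Lemma~\ref{Lemma:1}. Substituting it converts each summand into $2\,\pi_\theta(a|s)\,\nabla_\theta\log\pi_\theta(a|s)\,(\pi_\theta(a|s)-\pi_E(a|s))$, so that the factor $\pi_\theta(a|s)$ is now available to play the role of the sampling distribution over actions $a\sim\pi_\theta$.

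Finally I would reinterpret the resulting weighted double sum as the expectation $\mathbb{E}_{s\sim d^{\pi_E},\,a\sim\pi_\theta}[\,2\nabla_\theta\log\pi_\theta(a|s)(\pi_\theta(a|s)-\pi_E(a|s))\,]$, recovering the claimed identity. The one point deserving care — and the main (mild) obstacle — is the bookkeeping of the state weighting: the loss $L(\theta)$ as defined sums uniformly over $s$, whereas the target expectation is taken under the state-occupancy measure $d^{\pi_E}$. I would handle this exactly as in Lemma~\ref{Lemma:1}, where the $d_\mu^{\pi_E}(s)$ factor is inserted so the outer sum becomes an expectation over $s\sim d^{\pi_E}$ while the factor $\pi_\theta(a|s)$ supplies the $a\sim\pi_\theta$ sampling. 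No further analytic content is required, only the consistent adoption of the same convention used for the first term, and no interchange-of-limit or convergence concerns arise since the state and action spaces are handled as finite sums throughout.
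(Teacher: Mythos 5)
Your proposal matches the paper's own proof essentially verbatim: expand the square, apply the chain rule (with $\pi_E$ constant in $\theta$), use the log-derivative identity $\nabla_\theta\pi_\theta = \pi_\theta\nabla_\theta\log\pi_\theta$, and rewrite the weighted double sum as an expectation over $s\sim d^{\pi_E}$, $a\sim\pi_\theta$. You even flag the same loose state-weighting bookkeeping that the paper's proof of Lemma~\ref{Lemma:1} exhibits and resolve it the same way, so there is nothing to add.
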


\begin{proof}
    \begin{align*}
    \nabla_\theta L(\theta) &= \nabla_\theta\sum_{s \in S} \sum_{a \in A} |\pi_\theta(a|s) - \pi_E(a|s)|^2 \\
    &= \nabla_\theta \sum_{s \in S} \sum_{a \in A} (\pi_\theta(a|s) - \pi_E(a|s))^2 \\
     &= 2 \sum_{s \in S} \sum_{a \in A} (\pi_\theta(a|s) - \pi_E(a|s)) \nabla_\theta \pi_\theta(a|s) \\
     &= 2 \sum_{s \in S} \sum_{a \in A} (\pi_\theta(a|s) - \pi_E(a|s)) \pi_\theta(a|s) \nabla_\theta \text{log}(\pi_\theta(a|s)) \\
     & =\mathbb{E}_{s \sim d^{\pi_E}, a \sim \pi_\theta} \big{[} {2\nabla_\theta} \text{log} \pi_{\theta}(a|s) (\pi_\theta (a|s) - \pi_E (a|s)) \big{]}.
\end{align*}
\end{proof}

In conclusion, in this section we introduced Theorem~\ref{Th:1}, which states that the performance difference between an expert $\pi_E$ and a policy $\pi_I$ can be lower bounded by two terms, one related to the advantage of $\pi_I$ over $\pi_E$, and a term related to their similarity. Then, in Lemma~\ref{Lemma:1} and  Lemma~\ref{Lemma:2} the gradients of the two terms have been computed, and they will be exploited to minimize the proposed performance difference bound. Lemma~\ref{Lemma:1} is related to the disadvantage of playing the policy $\pi_I$ instead of the expert, and we want to minimize it to obtain the best-performing policy. Indeed, given the constraint of using a different (simplified) policy space, it is not guaranteed that it is possible to reach a disadvantage of 0. Lemma~\ref{Lemma:2} instead is related to the discrepancy between the two policies, which we want to keep as small as possible, for the purpose of explainability. A relevant example is the linear case, where we don't want to identify the best linear policy, but we seek a linear policy that is as similar as possible to the expert one, with a particular interest in mimicking its performance.

\subsection{Imitation Algorithm}
Using the bound presented in the previous section, it is possible to introduce an algorithmic procedure that exploits this quantity to optimize the surrogate policy. The pseudo-code of this algorithm, which we called Explainable Policy distillation with Augmented Behavioral Cloning (EXPLAIN), is reported in Algorithm~\ref{alg:cap}.
In particular, given an expert policy $\pi_E$ that could be the optimal policy $\pi^*$ or an expert well-performing policy in general, we collect a number $\tau$ of trajectories composed of tuples $(s, a, Q_{a_0}, ..., Q_{a_n})$, where $Q_{a_i}$ represents the action value function $Q(s, a_i)$ that is used to estimate the advantage function $A(s,a)$. In general, it is also possible to rely on algorithms that directly estimate the advantage.
Then, for each optimization step, the gradient of the two components of the bound presented in Theorem~\ref{Th:1} is computed, respectively accounting for the (dis)advantage with respect to the expert policy $\pi_E$ (Lemma~\ref{Lemma:1}) and for the distance between the two policies $\pi_\theta$ and $\pi_E$ (Lemma~\ref{Lemma:2}).

\label{Sec:Bound:Subsec:Algorithm}
\begin{algorithm}
\caption{EXPLAIN}\label{alg:cap}
\begin{algorithmic}
\Require $\pi_E$ (Expert policy), $N \geq 0$ (Number of Iterations), $\lambda$ (Learning Rate), $\eta$ (BC penalization)
\State collect trajectories $\tau$ using $\pi_E$
\State initialize  $\pi_{\theta}^{LR}$, Interpretable Policy (Softmax Regression)
\While{$N \neq 0$}
    \State $\nabla J(\theta) = \mathbf{E}_{\substack{s\sim \tau}}[\nabla_\theta \text{log} \pi_{\theta}^{LR}(a | s) A^{\pi_E}(s,a)$]
    \State $\nabla L(\theta)=\mathbf{E}_{\substack{s \sim \tau}} \big{[} {\nabla_\theta} \text{log} \pi_{\theta}(a|s) (\pi_\theta (a|s) - \pi_E (a|s)) \big{]}$
    
    \State $\theta \gets \theta + \lambda(\nabla J_A(\theta) - \eta \nabla L(\theta))$
    \State $N \gets N - 1$
\EndWhile
\end{algorithmic}
\end{algorithm}

Finally, $N, \lambda$, and $\eta$ are the hyperparameters of the algorithm, where $\lambda$ is the initial learning rate for Adam~\citep{Kingma2014AdamAM}, $\eta$ is a scaling factor of the Behavioral Cloning component, and $N$ is the number of iterations.
As can be observed, we need to sample only once a dataset from the expert $\pi_E$, and then we can iterate over it to optimize the Softmax Regression. As mentioned, the presence of the advantage allows us to include information about the expected return of a given action, thus we expect that the policy obtained from this process is able to focus more on actions with higher importance.

\section{Experiments}
\label{Sec:Experiments}
In this Section, we empirically validate the proposed algorithm on synthetic and real-world scenarios\footnote{Code available at \url{https://github.com/giovannidispoto/distillation-interpretable-policy}}.
Firstly, in Section~\ref{subsec:classic}, we trained an expert using DQN~\citep{DBLP:journals/corr/MnihKSGAWR13} on classic control tasks as a modified version of Inverted Pendulum and a discretized version of Mountain Car Continuous. Both environments present a large action space when discretized, which makes behavioral cloning harder. In particular, in the Inverted Pendulum, small errors can quickly compound and lead to failure. In Section~\ref{subsec:explain_trading}, we exploit the proposed algorithm to gain insight about an agent trained on a financial trading task. Since in real-world applications not all the features are equally important, we empirically demonstrate that our algorithm can effectively identify the relevant ones in this setting.

\subsection{Imitation Learning in Classic Control Environments}
\label{subsec:classic}
As a first empirical assessment, we trained DQN~\citep{DBLP:journals/corr/MnihKSGAWR13} using Stable-Baselines 3~\citep{stable-baselines3} on a discretized version of Inverted Pendulum from Gymnasium~\citep{towers2024gymnasium} with 100 actions. 
The Inverted Pendulum task consists of balancing a pole that is attached to a cart by applying forces to the cart to the left and to the right. The action represents the amount of force to apply to the cart and the relative sign (left or right). This environment is a relevant task for our approach since it is a relatively simple environment accompanied by a known optimal policy, but small errors quickly compound and lead to failure.
In particular, we opted for a sine transformation of the action, in order to emulate an environment in which different actions have the same impact, simply by discretizing the interval $[0, 4\pi]$ in the number of actions and then by applying the action $a_{env} = 3sin(a_{pred})$.
Before training the surrogate policy using Algorithm~\ref{alg:cap}, the features of the state are standardized. 

From the results, as can be seen in Figure~\ref{fig:Invertedpendulum},  we can notice that the Behavioral Cloning component, i.e., the second part of the lower bound discussed in the previous section, is able to extract a linear policy with performance close to the expert. Then, by including the first component related to the advantage, we are able to reach a higher mean performance with smaller variance, due to the fact that the advantage function can point out that different actions have the same expected return. In the figure, different dataset sizes are reported, i.e., an increasing number of expert trajectories is collected, showing increasing performances overall. Specifically, we can notice from the plots that the performance of the Behavioral Cloning component increases when more trajectories are collected from the expert policy.

\begin{figure}[t]
\centering
\begin{subfigure}{=0.3\textwidth}    
\centering
            \includegraphics[width=1.1\linewidth]{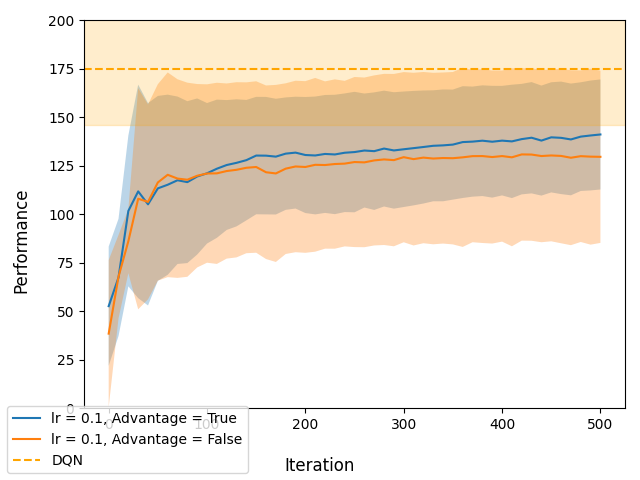}
            \caption{Dataset size 3, $\eta$ = 0.01}
            \label{fig:ds3_ip}
\end{subfigure}%
\vspace{0.5cm}
\hfill
\begin{subfigure}{=0.3\textwidth}  
\centering
\includegraphics[width=1.1\linewidth]{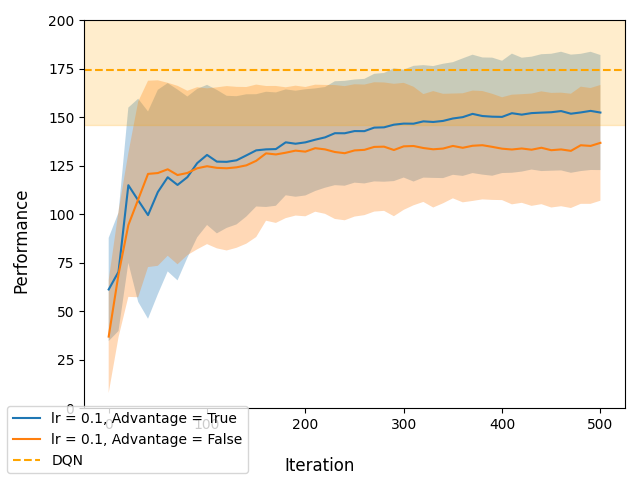}
            \caption{Dataset size 5, $\eta$ = 0.01}
            \label{fig:ds5_ip}
\end{subfigure}%
\vspace{0.5cm}
\hfill
\begin{subfigure}{=0.3\textwidth}   
\centering
            \includegraphics[width=1.1\linewidth]{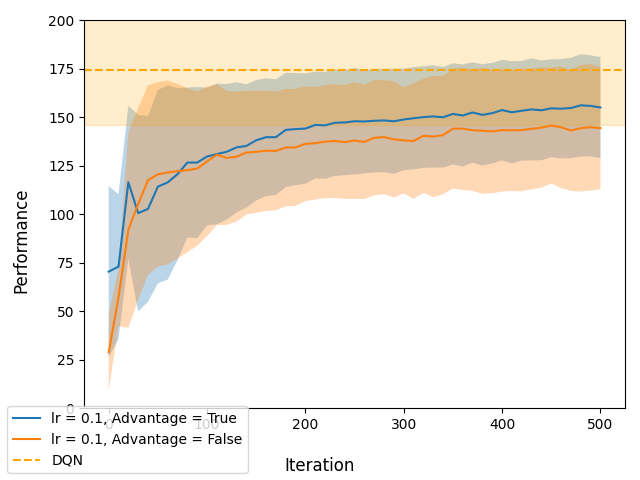}
            \caption{Dataset size 10, $\eta$ = 0.01}
            \label{fig:ds10_ip}
    \end{subfigure}%
    \vspace{0.5cm}
\caption{Performances of the expert (DQN) and of the linear policy on the modified Inverted Pendulum (=100 actions) task with different numbers of trajectories collected from the expert (Dataset size). The blue curve is obtained by optimizing Adv+BC, and the orange curve is only BC.
Mean and Standard Deviation over 6 seeds.}
\vspace{0.5cm}
\label{fig:Invertedpendulum}
\end{figure}

As a second experiment on classic control environments, we trained DQN on a discretized version of Mountain Car Continuous (=50 actions), in which a car is in the middle of a valley and needs to escape from it. The actions represent the directional force to apply to the car. To escape the valley, the policy needs to learn to accelerate to the left and to the right in order to build momentum. 
As it is possible to see in Figure~\ref{fig:MountainCar}, also in this case we are able to learn a linear policy from the expert policy, with comparable performances. In Figure~\ref{fig:mountain_car_policy}, the weights associated with each feature are further reported, underlying the clear interpretability of the learned surrogate linear policy. Unlike the previous setting, small errors in this environment have a limited impact, thus the Behavioral Cloning component of our algorithm is sufficient in this case to learn an interpretable linear policy from the expert one.

\begin{figure}[t]
\centering
\begin{subfigure}[b]{=0.4\textwidth}    
\centering
\includegraphics[width=\linewidth]{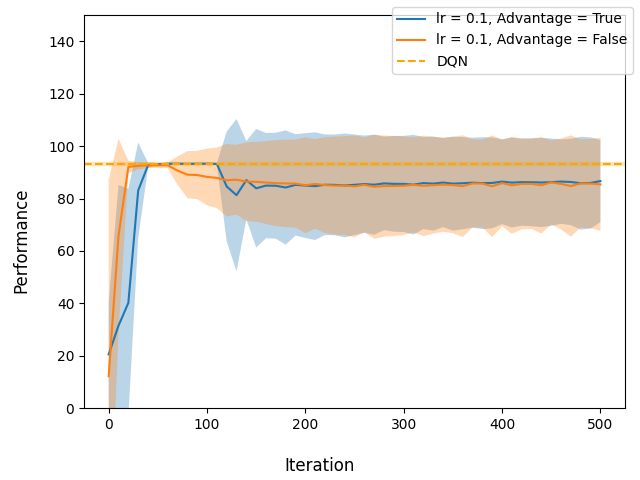}
            \caption{Dataset size 5, $\eta$ = 1e-3}
            \label{fig:ds5_mc}
\end{subfigure}%
\vspace{0.5cm}
\hfill
\begin{subfigure}[b]{=0.4\textwidth}  
\centering
\includegraphics[width=\linewidth]{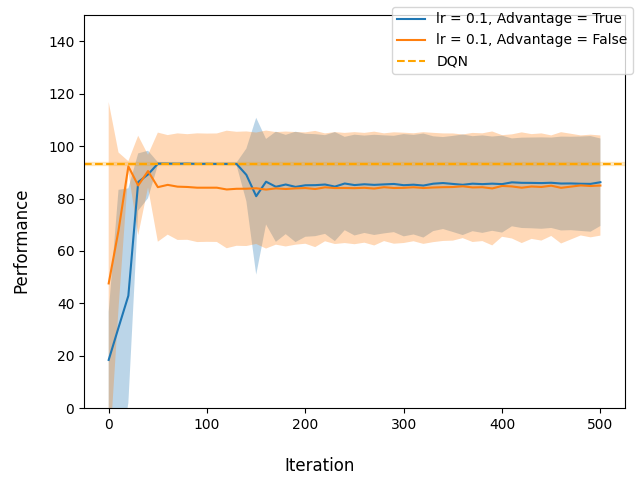}
            \caption{Dataset size 10, $\eta$ = 1e-3}
            \label{fig:ds10_mc}
\end{subfigure}%
    \vspace{0.5cm}
\caption{Performances of the expert (DQN) and of the linear policy on discretized Mountain Car Continuous (=50 actions) with different numbers of trajectories collected from the expert (Dataset size). The blue curve is obtained by optimizing Adv+BC, and the orange curve is only BC. Mean and Standard Deviation over 6 seeds. }
\vspace{0.5cm}
\label{fig:MountainCar}
\end{figure}

\begin{figure}[hbtp]
\centering
\includegraphics[width=0.34\textwidth]{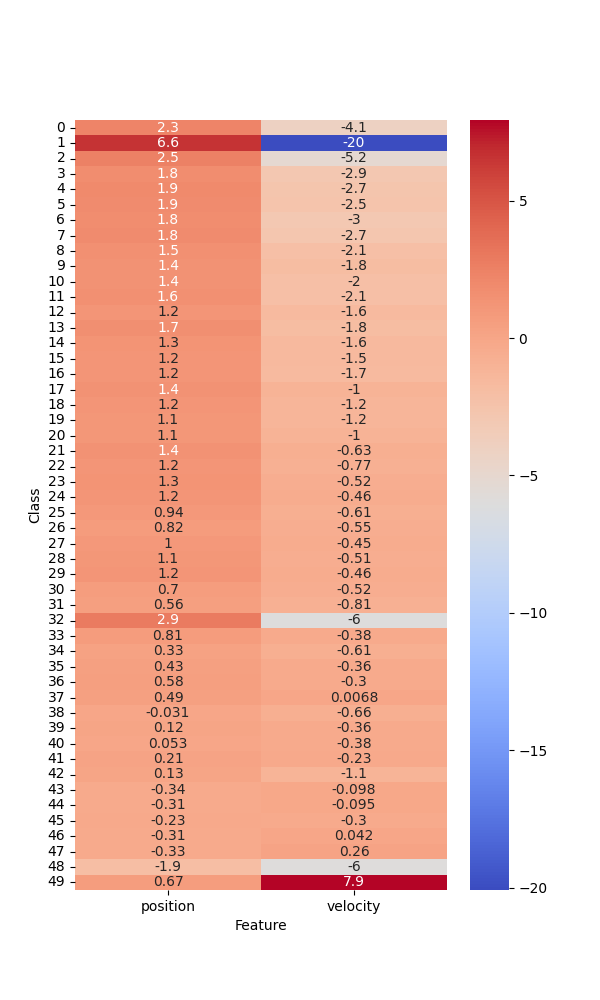}
\caption{Weights of the softmax regression related to each feature in Mountain Car and to each class. High weights on the position in the first classes can be associated with the behavior of pushing the car to the left in order to build momentum. Regarding the last action, it is likely associated with pushing the car to the right after gaining momentum. }

\label{fig:mountain_car_policy}
\end{figure}

\subsection{Explaining Financial Trading Agents}
\label{subsec:explain_trading}

To evaluate the explainability of the proposed algorithm in a real-world challenging scenario, where it is crucial to understand the reasoning behind the decisions of the agent, while limiting the effect on the performance, we consider a real-world task such as financial trading. Typically, these environments are populated by a large number of features, unlike toy-control tasks, but not all of them are equally important. In this context, we trained FQI (Fitted Q-iteration, ~\citep{JMLR:v6:ernst05a}) with XGBoost~\citep{DBLP:journals/corr/ChenG16} as a regressor (one for each action), to trade a financial instrument, using data from a Limit Order Book (LOB)~\citep{gould2013limitorderbooks}. In this environment, the state is made up of different features such as temporal information, prices, and trading volumes, and the agent, at each trading step, decides to go long (buy a single unit of the asset), flat (no action), or short (short sell a single unit of the asset). The reward function takes into account only the price variation $R_{t+1} = a_t (price_{t+1} - price_{t})$ (e.g., if the agent short-sells the asset and the price goes from 101€ to 100€, the agent gains 1€). In this setting, due to the absence of transaction costs, the flat action is not shown in the various plots (even if it can be selected by the agent). For the same reason, FQI can be trained considering only the first iteration, due to the limited or no additional benefit in subsequent iterations.
Since XGBoost has been selected as a regressor, we can easily get information about the importance of the features, as reported in Figure~\ref{fig:Explainability_trading_regressor}. From this result, it is possible to notice that most of the features are of high importance, with some peaks and some others having negligible contributions. 
\begin{figure}[hbtp]
\centering
\includegraphics[width=1\linewidth]{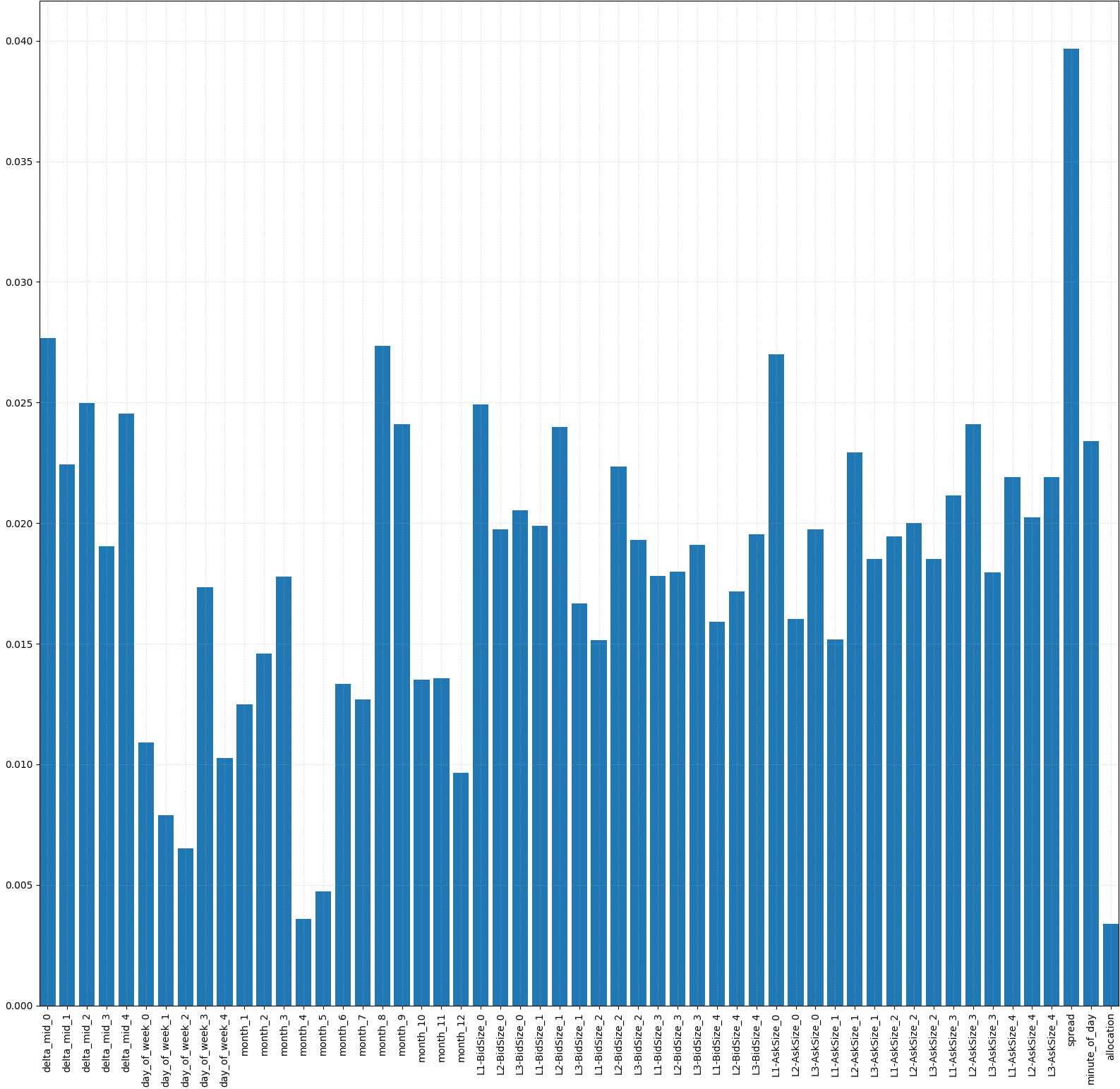}
\vspace{0.5cm}
\caption{Features Importance of the XGBoost regressors related to short action.}
\vspace{0.5cm}
\label{fig:Explainability_trading_regressor}
\end{figure}

In this setting, we trained Algorithm~\ref{alg:cap}, resulting in a surrogate linear policy that can be explicitly interpreted by a domain expert. Firstly, as can be seen from Figure~\ref{fig:fqi_trading}, the raw performances are satisfactory compared w.r.t. the expert policy.

\begin{figure}[hbtp]
\centering
\includegraphics[width=0.45\textwidth]{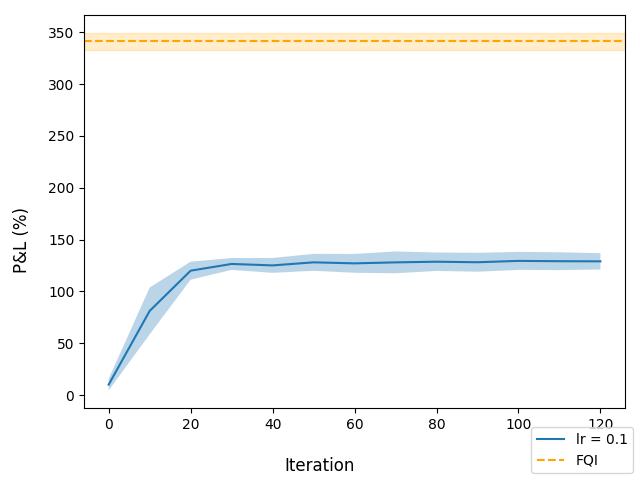}
\caption{Performances of the expert (FQI) and of the linear policy on the Financial Trading task. The P\&L (Profit and Loss) is calculated as a percentage of the initial capital. Mean and standard deviation over 6 seeds.}
\label{fig:fqi_trading}
\vspace{0.5cm}
\end{figure}

Additionaly, by plotting the evolution of the coefficients of the softmax regression (Figure~\ref{fig:weights_evolution_softmax}), we can observe that the agent learned a simple imbalance rule (Equation~\ref{eq:ImbRule}) on the volumes on the first level of the order book, that is profitable in absence of transaction costs:

\begin{figure*}[hbtp]
\centering
\begin{subfigure}[b]{=0.4\textwidth}    
\centering
            \includegraphics[width=0.7\linewidth]{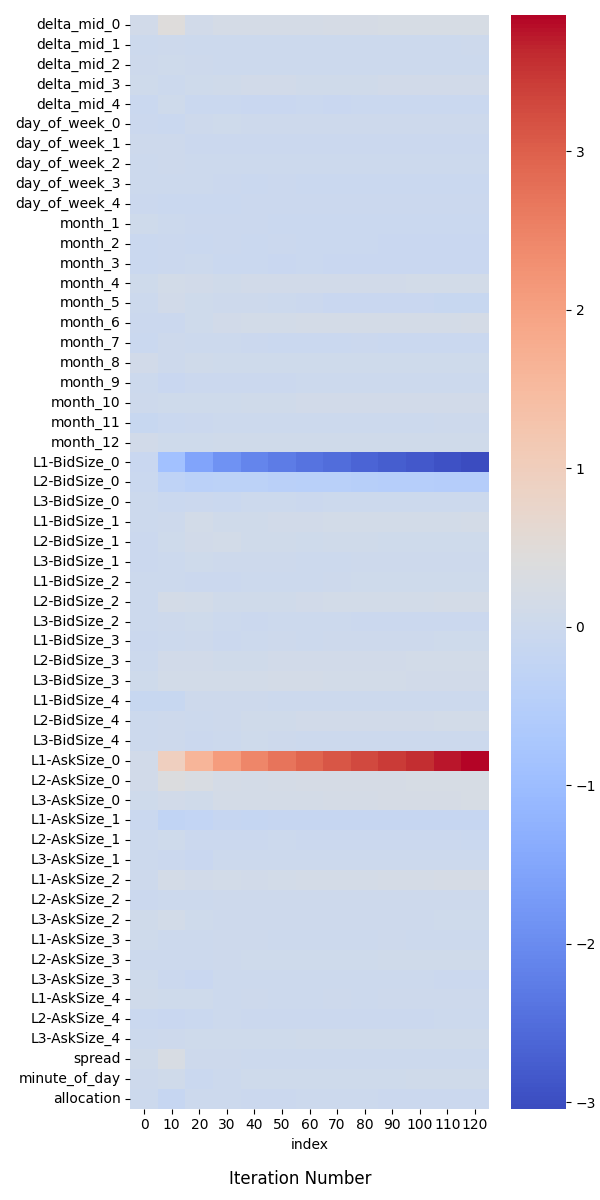}
            \caption{Short}
            \label{fig:short_weights}
\end{subfigure}%
\hfill
\begin{subfigure}[b]{=0.4\textwidth}  
\centering
\includegraphics[width=0.7\linewidth]{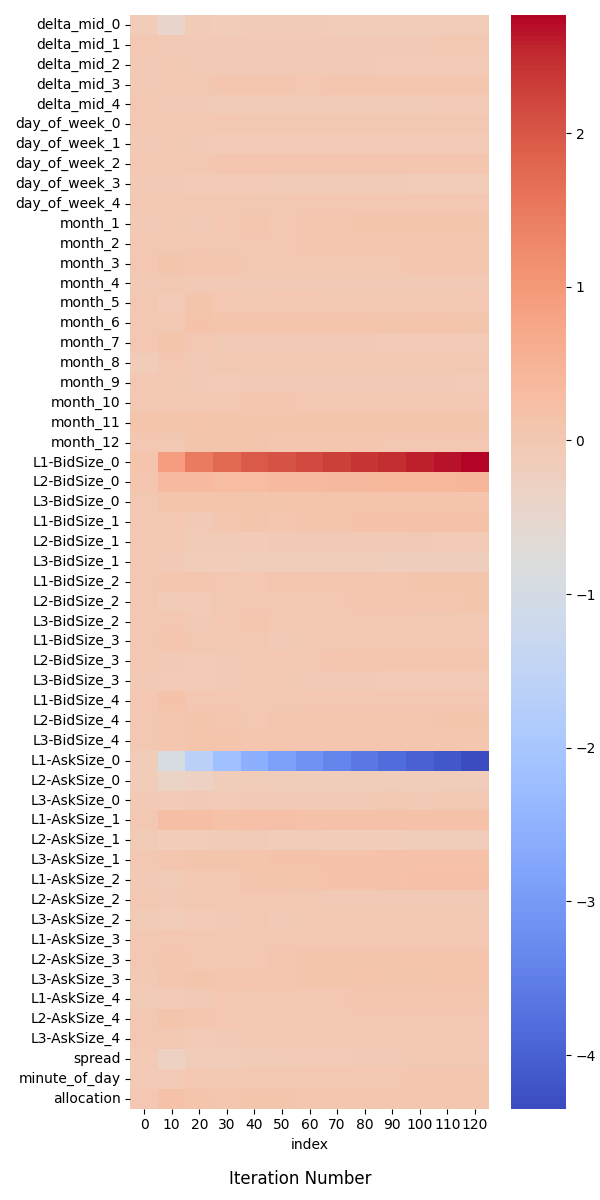}
            \caption{Long}
            \label{fig:long_weights}
\end{subfigure}%
\vspace{0.3cm}
\caption{Weights evolution of the softmax regression related to each feature. In Figure~\ref{fig:short_weights} are visible the weights related to the short action, and in Figure~\ref{fig:long_weights}, the weights of the long action. The importance of the two features is close in absolute value, but different in sign, reflecting the opposite effect of the two actions. Flat action is not reported since it is not relevant in a setting without transaction costs.}
\vspace{0.2cm}
\label{fig:weights_evolution_softmax}
\end{figure*}

\begin{align}
    \label{eq:ImbRule}
    \pi(s) = \begin{cases} 
    \text{long},  &\text{if L1-BidSize} > \text{L1-AskSize}    \\[1ex]
    \text{short},  &\text{if L1-BidSize} < \text{L1-AskSize} \\[1ex]
    \text{flat} & \text{otherwise}
  \end{cases}
\end{align}
The imbalance rule is intuitive, since a large bid size indicates that the traders want to buy the assets, with a subsequent price increase. On the other hand, a large ask size suggests the opposite, indicating a negative market sentiment. In the end, with our algorithm, we can obtain a linear policy that is able to capture most of the information and a discrete profit, suggesting a clear trading strategy to market experts in this context that would not be possible with the original complex policy.

\section{Discussion and Conclusions}
\label{Sec:Conclusions}

In this paper, we introduced a novel approach designed to extract an interpretable policy from a complex expert policy trained using Deep RL techniques. Motivated by the need for transparency in safety-critical domains, our approach aims to retain the essential behavior of an expert agent while producing a policy that is simpler and easier to interpret.

The central idea is to move beyond standard Behavioral Cloning, which treats all imitation targets uniformly. Instead, our method introduces an advantage-aware mechanism that prioritizes fidelity in high-stakes decisions—those where the choice of action significantly influences long-term outcomes. In contrast, when the expert’s actions have low relative advantage, the algorithm favors simplicity, allowing the surrogate model to default to more interpretable behavior. This selective imitation enables a better trade-off between transparency and performance.

We formalized this intuition by adapting a theoretical bound from Pirotta et al.~\citep{pmlr-v28-pirotta13}, guiding the surrogate training objective. Our empirical results confirmed the practical benefits of this approach. In a discrete environment with a large action space and compounding errors, our method outperformed standard Behavioral Cloning both in terms of average return and robustness. Furthermore, in a real-world financial trading task, we demonstrated that the resulting linear policy not only mirrored the expert's critical decisions, but also uncovered clear behavioral patterns that were not identified by conventional feature importance analyses.

A key limitation of our current approach lies in the expressiveness of linear models, which may struggle to capture more intricate dynamics. Future work could explore more flexible yet interpretable policy classes, such as generalized additive models or decision rules. Additionally, extending the method to continuous action spaces through regression-based surrogates would further increase its applicability.

\begin{ack}
This paper is supported by FAIR project, funded by the NextGen-
erationEU program within the PNRR-PE-AI scheme
(M4C2, Investment 1.3, Line on Artificial Intelligence) and by PNRR (M4C2 Investment 3.3, Innovative PhD programs - CUP D43C23002220008).
\end{ack}


\newpage
\bibliography{mybibfile}

\begin{thebibliography}{33}
\providecommand{\natexlab}[1]{#1}
\providecommand{\url}[1]{\texttt{#1}}
\expandafter\ifx\csname urlstyle\endcsname\relax
  \providecommand{\doi}[1]{doi: #1}\else
  \providecommand{\doi}{doi: \begingroup \urlstyle{rm}\Url}\fi

\bibitem[Atrey et~al.(2019)Atrey, Clary, and
  Jensen]{DBLP:journals/corr/abs-1912-05743}
A.~Atrey, K.~Clary, and D.~D. Jensen.
\newblock Exploratory not explanatory: Counterfactual analysis of saliency maps
  for deep {RL}.
\newblock \emph{CoRR}, abs/1912.05743, 2019.
\newblock URL \url{http://arxiv.org/abs/1912.05743}.

\bibitem[Ba and Caruana(2014)]{NIPS2014_b0c355a9}
L.~J. Ba and R.~Caruana.
\newblock Do deep nets really need to be deep?
\newblock In Z.~Ghahramani, M.~Welling, C.~Cortes, N.~Lawrence, and
  K.~Weinberger, editors, \emph{Advances in Neural Information Processing
  Systems}, volume~27. Curran Associates, Inc., 2014.
\newblock URL
  \url{https://proceedings.neurips.cc/paper_files/paper/2014/file/b0c355a9dedccb50e5537e8f2e3f0810-Paper.pdf}.

\bibitem[Bastani et~al.(2018)Bastani, Pu, and
  Solar-Lezama]{NEURIPS2018_e6d8545d}
O.~Bastani, Y.~Pu, and A.~Solar-Lezama.
\newblock Verifiable reinforcement learning via policy extraction.
\newblock In S.~Bengio, H.~Wallach, H.~Larochelle, K.~Grauman, N.~Cesa-Bianchi,
  and R.~Garnett, editors, \emph{Advances in Neural Information Processing
  Systems}, volume~31. Curran Associates, Inc., 2018.
\newblock URL
  \url{https://proceedings.neurips.cc/paper_files/paper/2018/file/e6d8545daa42d5ced125a4bf747b3688-Paper.pdf}.

\bibitem[Breiman(2001)]{10.1023/A:1010933404324}
L.~Breiman.
\newblock Random forests.
\newblock \emph{Mach. Learn.}, 45\penalty0 (1):\penalty0 5–32, Oct. 2001.
\newblock ISSN 0885-6125.
\newblock \doi{10.1023/A:1010933404324}.
\newblock URL \url{https://doi.org/10.1023/A:1010933404324}.

\bibitem[Briola et~al.(2021)Briola, Turiel, Marcaccioli, and
  Aste]{DBLP:journals/corr/abs-2101-07107}
A.~Briola, J.~D. Turiel, R.~Marcaccioli, and T.~Aste.
\newblock Deep reinforcement learning for active high frequency trading.
\newblock \emph{CoRR}, abs/2101.07107, 2021.
\newblock URL \url{https://arxiv.org/abs/2101.07107}.

\bibitem[Chen and Guestrin(2016)]{DBLP:journals/corr/ChenG16}
T.~Chen and C.~Guestrin.
\newblock Xgboost: {A} scalable tree boosting system.
\newblock \emph{CoRR}, abs/1603.02754, 2016.
\newblock URL \url{http://arxiv.org/abs/1603.02754}.

\bibitem[Cruz et~al.(2019)Cruz, Dazeley, and
  Vamplew]{10.1007/978-3-030-35288-2_6}
F.~Cruz, R.~Dazeley, and P.~Vamplew.
\newblock Memory-based explainable reinforcement learning.
\newblock In J.~Liu and J.~Bailey, editors, \emph{AI 2019: Advances in
  Artificial Intelligence}, pages 66--77, Cham, 2019. Springer International
  Publishing.
\newblock ISBN 978-3-030-35288-2.

\bibitem[Das and Rad(2020)]{DBLP:journals/corr/abs-2006-11371}
A.~Das and P.~Rad.
\newblock Opportunities and challenges in explainable artificial intelligence
  {(XAI):} {A} survey.
\newblock \emph{CoRR}, abs/2006.11371, 2020.
\newblock URL \url{https://arxiv.org/abs/2006.11371}.

\bibitem[Ernst et~al.(2005)Ernst, Geurts, and Wehenkel]{JMLR:v6:ernst05a}
D.~Ernst, P.~Geurts, and L.~Wehenkel.
\newblock Tree-based batch mode reinforcement learning.
\newblock \emph{Journal of Machine Learning Research}, 6\penalty0
  (18):\penalty0 503--556, 2005.
\newblock URL \url{http://jmlr.org/papers/v6/ernst05a.html}.

\bibitem[Gould et~al.(2013)Gould, Porter, Williams, McDonald, Fenn, and
  Howison]{gould2013limitorderbooks}
M.~D. Gould, M.~A. Porter, S.~Williams, M.~McDonald, D.~J. Fenn, and S.~D.
  Howison.
\newblock Limit order books, 2013.
\newblock URL \url{https://arxiv.org/abs/1012.0349}.

\bibitem[Haarnoja et~al.(2018)Haarnoja, Zhou, Abbeel, and
  Levine]{DBLP:journals/corr/abs-1801-01290}
T.~Haarnoja, A.~Zhou, P.~Abbeel, and S.~Levine.
\newblock Soft actor-critic: Off-policy maximum entropy deep reinforcement
  learning with a stochastic actor.
\newblock \emph{CoRR}, abs/1801.01290, 2018.
\newblock URL \url{http://arxiv.org/abs/1801.01290}.

\bibitem[Ho and Ermon(2016)]{DBLP:journals/corr/HoE16}
J.~Ho and S.~Ermon.
\newblock Generative adversarial imitation learning.
\newblock \emph{CoRR}, abs/1606.03476, 2016.
\newblock URL \url{http://arxiv.org/abs/1606.03476}.

\bibitem[Kaplan et~al.(2020)Kaplan, McCandlish, Henighan, Brown, Chess, Child,
  Gray, Radford, Wu, and Amodei]{DBLP:journals/corr/abs-2001-08361}
J.~Kaplan, S.~McCandlish, T.~Henighan, T.~B. Brown, B.~Chess, R.~Child,
  S.~Gray, A.~Radford, J.~Wu, and D.~Amodei.
\newblock Scaling laws for neural language models.
\newblock \emph{CoRR}, abs/2001.08361, 2020.
\newblock URL \url{https://arxiv.org/abs/2001.08361}.

\bibitem[Kingma and Ba(2014)]{Kingma2014AdamAM}
D.~P. Kingma and J.~Ba.
\newblock Adam: A method for stochastic optimization.
\newblock \emph{CoRR}, abs/1412.6980, 2014.
\newblock URL \url{https://api.semanticscholar.org/CorpusID:6628106}.

\bibitem[Likmeta et~al.(2020)Likmeta, Metelli, Tirinzoni, Giol, Restelli, and
  Romano]{LIKMETA2020103568}
A.~Likmeta, A.~M. Metelli, A.~Tirinzoni, R.~Giol, M.~Restelli, and D.~Romano.
\newblock Combining reinforcement learning with rule-based controllers for
  transparent and general decision-making in autonomous driving.
\newblock \emph{Robotics and Autonomous Systems}, 131:\penalty0 103568, 2020.
\newblock ISSN 0921-8890.
\newblock \doi{https://doi.org/10.1016/j.robot.2020.103568}.
\newblock URL
  \url{https://www.sciencedirect.com/science/article/pii/S0921889020304085}.

\bibitem[Liu et~al.(2018)Liu, Schulte, Zhu, and
  Li]{DBLP:journals/corr/abs-1807-05887}
G.~Liu, O.~Schulte, W.~Zhu, and Q.~Li.
\newblock Toward interpretable deep reinforcement learning with linear model
  u-trees.
\newblock \emph{CoRR}, abs/1807.05887, 2018.
\newblock URL \url{http://arxiv.org/abs/1807.05887}.

\bibitem[Lundberg and Lee(2017)]{DBLP:journals/corr/LundbergL17}
S.~M. Lundberg and S.~Lee.
\newblock A unified approach to interpreting model predictions.
\newblock \emph{CoRR}, abs/1705.07874, 2017.
\newblock URL \url{http://arxiv.org/abs/1705.07874}.

\bibitem[Madumal et~al.(2020)Madumal, Miller, Sonenberg, and
  Vetere]{DBLP:conf/aaai/Madumal0SV20}
P.~Madumal, T.~Miller, L.~Sonenberg, and F.~Vetere.
\newblock Explainable reinforcement learning through a causal lens.
\newblock In \emph{The Thirty-Fourth {AAAI} Conference on Artificial
  Intelligence, {AAAI} 2020, The Thirty-Second Innovative Applications of
  Artificial Intelligence Conference, {IAAI} 2020, The Tenth {AAAI} Symposium
  on Educational Advances in Artificial Intelligence, {EAAI} 2020, New York,
  NY, USA, February 7-12, 2020}, pages 2493--2500. {AAAI} Press, 2020.
\newblock URL \url{https://aaai.org/ojs/index.php/AAAI/article/view/5631}.

\bibitem[Milani et~al.(2024)Milani, Topin, Veloso, and Fang]{10.1145/3616864}
S.~Milani, N.~Topin, M.~Veloso, and F.~Fang.
\newblock Explainable reinforcement learning: A survey and comparative review.
\newblock \emph{ACM Comput. Surv.}, 56\penalty0 (7), Apr. 2024.
\newblock ISSN 0360-0300.
\newblock \doi{10.1145/3616864}.
\newblock URL \url{https://doi.org/10.1145/3616864}.

\bibitem[Mnih et~al.(2013)Mnih, Kavukcuoglu, Silver, Graves, Antonoglou,
  Wierstra, and Riedmiller]{DBLP:journals/corr/MnihKSGAWR13}
V.~Mnih, K.~Kavukcuoglu, D.~Silver, A.~Graves, I.~Antonoglou, D.~Wierstra, and
  M.~A. Riedmiller.
\newblock Playing atari with deep reinforcement learning.
\newblock \emph{CoRR}, abs/1312.5602, 2013.
\newblock URL \url{http://arxiv.org/abs/1312.5602}.

\bibitem[Pirotta et~al.(2013)Pirotta, Restelli, Pecorino, and
  Calandriello]{pmlr-v28-pirotta13}
M.~Pirotta, M.~Restelli, A.~Pecorino, and D.~Calandriello.
\newblock Safe policy iteration.
\newblock In S.~Dasgupta and D.~McAllester, editors, \emph{Proceedings of the
  30th International Conference on Machine Learning}, volume~28 of
  \emph{Proceedings of Machine Learning Research}, pages 307--315, Atlanta,
  Georgia, USA, 17--19 Jun 2013. PMLR.
\newblock URL \url{https://proceedings.mlr.press/v28/pirotta13.html}.

\bibitem[Raffin et~al.(2021)Raffin, Hill, Gleave, Kanervisto, Ernestus, and
  Dormann]{stable-baselines3}
A.~Raffin, A.~Hill, A.~Gleave, A.~Kanervisto, M.~Ernestus, and N.~Dormann.
\newblock Stable-baselines3: Reliable reinforcement learning implementations.
\newblock \emph{Journal of Machine Learning Research}, 22\penalty0
  (268):\penalty0 1--8, 2021.
\newblock URL \url{http://jmlr.org/papers/v22/20-1364.html}.

\bibitem[Ross and Bagnell(2010)]{pmlr-v9-ross10a}
S.~Ross and D.~Bagnell.
\newblock Efficient reductions for imitation learning.
\newblock In Y.~W. Teh and M.~Titterington, editors, \emph{Proceedings of the
  Thirteenth International Conference on Artificial Intelligence and
  Statistics}, volume~9 of \emph{Proceedings of Machine Learning Research},
  pages 661--668, Chia Laguna Resort, Sardinia, Italy, 13--15 May 2010. PMLR.
\newblock URL \url{https://proceedings.mlr.press/v9/ross10a.html}.

\bibitem[Ross et~al.(2010)Ross, Gordon, and
  Bagnell]{DBLP:journals/corr/abs-1011-0686}
S.~Ross, G.~J. Gordon, and J.~A. Bagnell.
\newblock No-regret reductions for imitation learning and structured
  prediction.
\newblock \emph{CoRR}, abs/1011.0686, 2010.
\newblock URL \url{http://arxiv.org/abs/1011.0686}.

\bibitem[Rusu et~al.(2016)Rusu, Colmenarejo, G{\"{u}}l{\c{c}}ehre, Desjardins,
  Kirkpatrick, Pascanu, Mnih, Kavukcuoglu, and
  Hadsell]{DBLP:journals/corr/RusuCGDKPMKH15}
A.~A. Rusu, S.~G. Colmenarejo, {\c{C}}.~G{\"{u}}l{\c{c}}ehre, G.~Desjardins,
  J.~Kirkpatrick, R.~Pascanu, V.~Mnih, K.~Kavukcuoglu, and R.~Hadsell.
\newblock Policy distillation.
\newblock In Y.~Bengio and Y.~LeCun, editors, \emph{4th International
  Conference on Learning Representations, {ICLR} 2016, San Juan, Puerto Rico,
  May 2-4, 2016, Conference Track Proceedings}, 2016.
\newblock URL \url{http://arxiv.org/abs/1511.06295}.

\bibitem[Sallab et~al.(2017)Sallab, Abdou, Perot, and Yogamani]{Sallab_2017}
A.~E. Sallab, M.~Abdou, E.~Perot, and S.~Yogamani.
\newblock Deep reinforcement learning framework for autonomous driving.
\newblock \emph{Electronic Imaging}, 29\penalty0 (19):\penalty0 70–76, Jan.
  2017.
\newblock ISSN 2470-1173.
\newblock \doi{10.2352/issn.2470-1173.2017.19.avm-023}.
\newblock URL \url{http://dx.doi.org/10.2352/ISSN.2470-1173.2017.19.AVM-023}.

\bibitem[Silver et~al.(2016)Silver, Huang, Maddison, Guez, Sifre, van~den
  Driessche, Schrittwieser, Antonoglou, Panneershelvam, Lanctot, Dieleman,
  Grewe, Nham, Kalchbrenner, Sutskever, Lillicrap, Leach, Kavukcuoglu, Graepel,
  and Hassabis]{44806}
D.~Silver, A.~Huang, C.~J. Maddison, A.~Guez, L.~Sifre, G.~van~den Driessche,
  J.~Schrittwieser, I.~Antonoglou, V.~Panneershelvam, M.~Lanctot, S.~Dieleman,
  D.~Grewe, J.~Nham, N.~Kalchbrenner, I.~Sutskever, T.~Lillicrap, M.~Leach,
  K.~Kavukcuoglu, T.~Graepel, and D.~Hassabis.
\newblock Mastering the game of go with deep neural networks and tree search.
\newblock \emph{Nature}, 529:\penalty0 484--503, 2016.
\newblock URL
  \url{http://www.nature.com/nature/journal/v529/n7587/full/nature16961.html}.

\bibitem[Sutton and Barto(2018)]{sutton1998reinforcement}
R.~S. Sutton and A.~G. Barto.
\newblock \emph{Reinforcement learning: An introduction}.
\newblock MIT press, 2018.

\bibitem[Sutton et~al.(1999)Sutton, McAllester, Singh, and
  Mansour]{10.5555/3009657.3009806}
R.~S. Sutton, D.~McAllester, S.~Singh, and Y.~Mansour.
\newblock Policy gradient methods for reinforcement learning with function
  approximation.
\newblock In \emph{Proceedings of the 13th International Conference on Neural
  Information Processing Systems}, NIPS'99, page 1057–1063, Cambridge, MA,
  USA, 1999. MIT Press.

\bibitem[Towers et~al.(2024)Towers, Kwiatkowski, Terry, Balis, De~Cola, Deleu,
  Goul{\~a}o, Kallinteris, Krimmel, KG, et~al.]{towers2024gymnasium}
M.~Towers, A.~Kwiatkowski, J.~Terry, J.~U. Balis, G.~De~Cola, T.~Deleu,
  M.~Goul{\~a}o, A.~Kallinteris, M.~Krimmel, A.~KG, et~al.
\newblock Gymnasium: A standard interface for reinforcement learning
  environments.
\newblock \emph{arXiv preprint arXiv:2407.17032}, 2024.

\bibitem[Verma et~al.(2018)Verma, Murali, Singh, Kohli, and
  Chaudhuri]{DBLP:journals/corr/abs-1804-02477}
A.~Verma, V.~Murali, R.~Singh, P.~Kohli, and S.~Chaudhuri.
\newblock Programmatically interpretable reinforcement learning.
\newblock \emph{CoRR}, abs/1804.02477, 2018.
\newblock URL \url{http://arxiv.org/abs/1804.02477}.

\bibitem[Williams(1992)]{10.1007/BF00992696}
R.~J. Williams.
\newblock Simple statistical gradient-following algorithms for connectionist
  reinforcement learning.
\newblock \emph{Mach. Learn.}, 8\penalty0 (3–4):\penalty0 229–256, May
  1992.
\newblock ISSN 0885-6125.
\newblock \doi{10.1007/BF00992696}.
\newblock URL \url{https://doi.org/10.1007/BF00992696}.

\bibitem[Xing et~al.(2023)Xing, Nagata, Zou, Neftci, and Krichmar]{XING2023228}
J.~Xing, T.~Nagata, X.~Zou, E.~Neftci, and J.~L. Krichmar.
\newblock Achieving efficient interpretability of reinforcement learning via
  policy distillation and selective input gradient regularization.
\newblock \emph{Neural Networks}, 161:\penalty0 228--241, 2023.
\newblock ISSN 0893-6080.
\newblock \doi{https://doi.org/10.1016/j.neunet.2023.01.025}.
\newblock URL
  \url{https://www.sciencedirect.com/science/article/pii/S0893608023000254}.

\end{thebibliography}

\end{document}